\documentclass[letterpaper, 10 pt, conference]{ieeeconf}  

\IEEEoverridecommandlockouts                              

\usepackage{stfloats}
\usepackage{amsmath,amssymb}

\usepackage{amsthm}
\usepackage{graphicx}
\usepackage{booktabs}
\usepackage{xcolor}
\usepackage{hyperref}
\usepackage{tikz}
\usepackage{algorithm}
\usepackage{algorithmic}
\usepackage{multirow}
\usepackage{subcaption}
\usepackage{pifont}
\usepackage[inkscapelatex=false]{svg}
\newcommand{\cmark}{\ding{51}}
\newcommand{\xmark}{\ensuremath{-}}

\newtheorem{proposition}{Proposition}
\newtheorem{remark}{Remark}
\usepackage{cite}
\usepackage{svg}
\usepackage{hyperref}

\graphicspath{{results_publication_n50/figures/}}
\title{\LARGE \bf Efficient Hybrid Hard--Soft Signal Temporal Logic MPC for Driving
  under Multimodal Uncertainty}
  
\author{Kumar Manas and Adrian Paschke%
\thanks{K. Manas is with Technical University Darmstadt, Germany.
This work was conducted while he was affiliated with
Freie Universit{\"a}t Berlin, Germany.
Email: \texttt{first.last@tu-darmstadt.de}.\newline
A. Paschke is with Freie Universit{\"a}t Berlin
and Fraunhofer FOKUS, Berlin, Germany.}%
}

\begin{document}
\maketitle

\begin{abstract}
Motion planning for automated vehicles in interactive traffic has to simultaneously ensure safety, respect traffic rules, and maintain comfortable progress under multi-modal behavior uncertainty. Imposing all these requirements as hard constraints reduces feasibility, whereas treating them all as costs allows safety-critical requirements to be traded against progress. We propose a hybrid hard--soft Signal Temporal Logic (STL) MPC framework that optimizes a single shared ego trajectory against every retained behavior hypothesis: minimum-distance and speed predicates are imposed as explicit hard constraints, while soft STL robustness terms shape clearance, comfort, and interaction quality within the resulting feasible driving corridor. This retains robustness to discrete interaction modes without branch-specific contingent ego policies. We further identify substantial symbolic graph-construction overhead and introduce reachability-certified relevance screening and vectorized symbolic construction to reduce computational cost while preserving the feasible set and objective. Closed-loop Monte Carlo evaluation over six highway and intersection scenarios shows the highest strict success rate among the compared planners with no observed collisions or rule violations, at roughly a quarter of the planning latency of the unoptimized formulation. An ablation study separates the two layers: the hard constraints define the safety envelope, while the soft terms improve clearance and motion quality within it.
Project code:
\href{https://github.com/kumarmanas/Hybrid_STL_MPC_Planner}
{\nolinkurl{Hybrid_STL_MPC_Planner}}.
\end{abstract}

\textbf{Keywords:} MPC, Signal Temporal Logic, Motion Planning, Multi-agent Interaction, Uncertainty.

\section{Introduction}

Autonomous driving in interactive traffic requires motion planners that jointly
ensure (i)~\emph{safety}, by avoiding collisions and maintaining sufficient
clearance; (ii)~\emph{traffic-rule compliance}, by respecting constraints such
as speed limits, intersection-yield requirements, and no-passing restrictions; and
(iii)~\emph{comfort and efficiency}, by generating smooth trajectories that
make progress without unnecessary conservatism. These objectives become
difficult to reconcile under multi-modal behavior uncertainty: an interacting
vehicle may yield or proceed, while planning only against the most likely mode
can become unsafe and worst-case reasoning can sacrifice progress
unnecessarily~\cite{chen2022branchmpc,mustafa2025racp}.

Model Predictive Control (MPC) and Model Predictive Contouring Control (MPCC) provide a principled framework for combining vehicle dynamics, actuator limits, safety constraints, and performance objectives~\cite{brito_model_2019}. Uncertainty has been addressed through risk-aware, barrier-function, robust, and scenario-based formulations~\cite{hakobyan2019cvar, batkovic2021robustscenario}. Contingency and Branch MPC further represent discrete behavior hypotheses, with one predicted trajectory for each candidate intent, such as the yield-or-proceed case above, and enable delayed commitment through mode-dependent future policies~\cite{alsterda2019cmpc, chen2022branchmpc, zhang2024riskawarebmpc}. However, their branch-specific decision variables increase the complexity of the optimization.

In contrast, we consider a \emph{shared-trajectory multi-hypothesis} formulation:
a single receding-horizon ego trajectory is optimized against all retained
behavior hypotheses. This preserves robustness to discrete interaction modes
without introducing branch-specific ego controls or non-anticipativity
constraints.

Signal Temporal Logic (STL) provides an expressive language for safety and
traffic-rule specifications with quantitative robustness semantics~\cite{raman2014model,mehdipour2023formalroads}. STL has been integrated with
MPC through robustness-based and optimization-based formulations~\cite{farahani2015robuststl,lin2023mpr,takayama2025stlccp}; however, applying
all rules either as strict constraints or as relaxable costs is problematic.
Strict formulations may reduce feasibility, while purely soft formulations
allow safety-critical requirements to be traded against progress. Moreover,
evaluating every STL interaction term across all agents and prediction steps
creates substantial computational overhead.

\textbf{We therefore propose an efficient hybrid hard--soft STL MPC framework
for multi-modal interaction uncertainty.}
Safety-critical minimum-distance and speed-limit predicates are imposed as
explicit hard constraints, while soft STL robustness terms encourage
larger clearances, comfortable motion, and rule-compliant interaction. Thus,
the hard layer defines the safety envelope, while the soft layer shapes
the behavior within the safety envelope.

To reduce computational cost without changing the represented optimization
problem, we introduce reachability-based relevance screening for short-range STL penalties and vectorized symbolic graph construction.
The closed-loop framework further combines safety-gated decision postponement, warm-starting, and a fallback strategy that removes soft shaping terms while
retaining the hard safety constraints.

The main contributions are:
\begin{enumerate}
\item A hybrid hard--soft STL formulation for shared-trajectory
    multi-hypothesis MPC, in which non-relaxable safety predicates and
    preference-level traffic rules occupy distinct roles within a single
    nonlinear program, avoiding branch-specific contingent ego policies.

    \item Reachability-certified relevance screening and vectorized
  symbolic construction that reduce computational overhead.
  The screening certificate permits redundant hard constraints
  and identically zero soft penalties to be omitted \emph{before}
  construction, preserving the feasible set and objective.

    \item A closed-loop execution strategy in which decision postponement is
    admitted only when a safety shield certifies the hard clearance margin,
    and solver failure is handled by a fallback that removes soft shaping
    while retaining the hard safety constraints.
\end{enumerate}
\vspace{-7pt}
\section{Related Work}
\label{sec:related}

\begin{table*}[!b]
\centering
\caption{Comparison with representative uncertainty-aware and STL-MPC methods.
``Multi-modal'' indicates discrete behavior hypotheses; ``hard/soft split''
distinguishes non-relaxable constraints from preferences.}
\label{tab:positioning}
\scriptsize
\renewcommand{\arraystretch}{1.12}
\setlength{\tabcolsep}{3.5pt}
\begin{tabular}{p{3.25cm}ccccc p{3.2cm}}
    \toprule
    Method
    & Multi-modal
    & Ego policy
    & STL
    & Hard/soft split
    & Postpone / recovery
    & Computational mechanism \\
    \midrule

    Robust scenario MPC~\cite{batkovic2021robustscenario}
    & \cmark
    & mode-dependent
    & \xmark
    & \xmark
    & \xmark
    & tube--scenario formulation \\

    Contingency MPC~\cite{alsterda2019cmpc}
    & partial
    & parallel
    & \xmark
    & \xmark
    & contingency
    & parallel prediction horizons \\

    Branch MPC~\cite{chen2022branchmpc}
    & \cmark
    & tree
    & \xmark
    & \xmark
    & implicit
    & scenario-tree structure \\

    Learning-based BMPC~\cite{bouzidi2024learningbmpc}
    & \cmark
    & tree
    & \xmark
    & \xmark
    & \cmark
    & online scenario selection \\

    Risk-aware BMPC~\cite{zhang2024riskawarebmpc}
    & \cmark
    & tree
    & \xmark
    & \xmark
    & \xmark
    & iterative min--max risk update \\

    Game-theoretic BMPC~\cite{zhang2023bmpcmerge}
    & \cmark
    & tree
    & \xmark
    & \xmark
    & \xmark
    & tailored tree solver \\

    STL-MPC~\cite{raman2014model,farahani2015robuststl}
    & \xmark
    & single
    & \cmark
    & \xmark
    & \xmark
    & mixed-integer encoding \\

    Low-complexity STL-MPC~\cite{cep2024stllowcomplex}
    & \xmark
    & single
    & \cmark
    & \xmark
    & \xmark
    & formula simplification / input blocking \\

    Priority-softened MPC~\cite{quan2025priority}
    & \xmark
    & single
    & \xmark
    & \cmark
    & softening
    & learned constraint selection \\

    \textbf{Hybrid STL-MPC (Ours)}
    & \cmark
    & \textbf{shared}
    & \cmark
    & \cmark
    & \cmark
    & \textbf{exact screening + vectorization} \\

    \bottomrule
\end{tabular}
\vspace{-10pt}
\end{table*}

Table~\ref{tab:positioning} summarizes our method against representative prior
work; the key distinctions are discussed below.

\textbf{Multi-modal and contingency MPC.}
Risk-aware and robust MPC address uncertain interactions through risk-sensitive
or scenario-based formulations~\cite{hakobyan2019cvar,batkovic2021robustscenario}.
Contingency and Branch MPC explicitly represent alternative future behaviors and
optimize consistent or mode-dependent ego policies across them
~\cite{alsterda2019cmpc,chen2022branchmpc,zhang2024riskawarebmpc,
zhang2023bmpcmerge,bouzidi2024learningbmpc}.
In contrast, our formulation retains multiple behavior hypotheses while
optimizing a \emph{single shared ego trajectory}, avoiding branch-specific ego
controls and non-anticipativity constraints.

\textbf{STL in MPC.}
STL has been integrated with MPC through mixed-integer encodings, robustness
optimization, and more scalable approximations
~\cite{raman2014model,farahani2015robuststl,lin2023mpr,
cep2024stllowcomplex,takayama2025stlccp}.
Recent work further considers uncertain uncontrollable agents through
data-driven uncertainty quantification~\cite{aut2026conformalstl}.
Our focus is complementary: safety-critical predicates are enforced as explicit
constraints, whereas soft STL robustness terms shape clearance, comfort, and
interaction quality.

\textbf{Prioritization and computational efficiency.}
Priority-based softening resolves conflicts among adjustable constraints while preserving hard safety requirements~\cite{quan2025priority}. Constraint-removal methods accelerate nonlinear MPC by certifying inactive constraints from offline cost-function bounds~\cite{dyrska2021accelerating}. Our screening differs in certificate and timing: relevance is determined geometrically from the ego reachable set and STL penalty support, recomputed each step, and irrelevant terms are removed \emph{before} symbolic NLP construction. \textit{This reduces instantiated terms, graph complexity, derivative evaluation, and solve cost.} Vectorized construction further reduces graph size without changing the objective or feasible set, while decision postponement and hard-preserving fallback support efficient closed-loop execution.

\vspace{-2pt}
\section{Problem Formulation and Preliminaries}
\label{sec:problem}

\begin{figure*}[t]
\centering
\includegraphics[width=\textwidth]{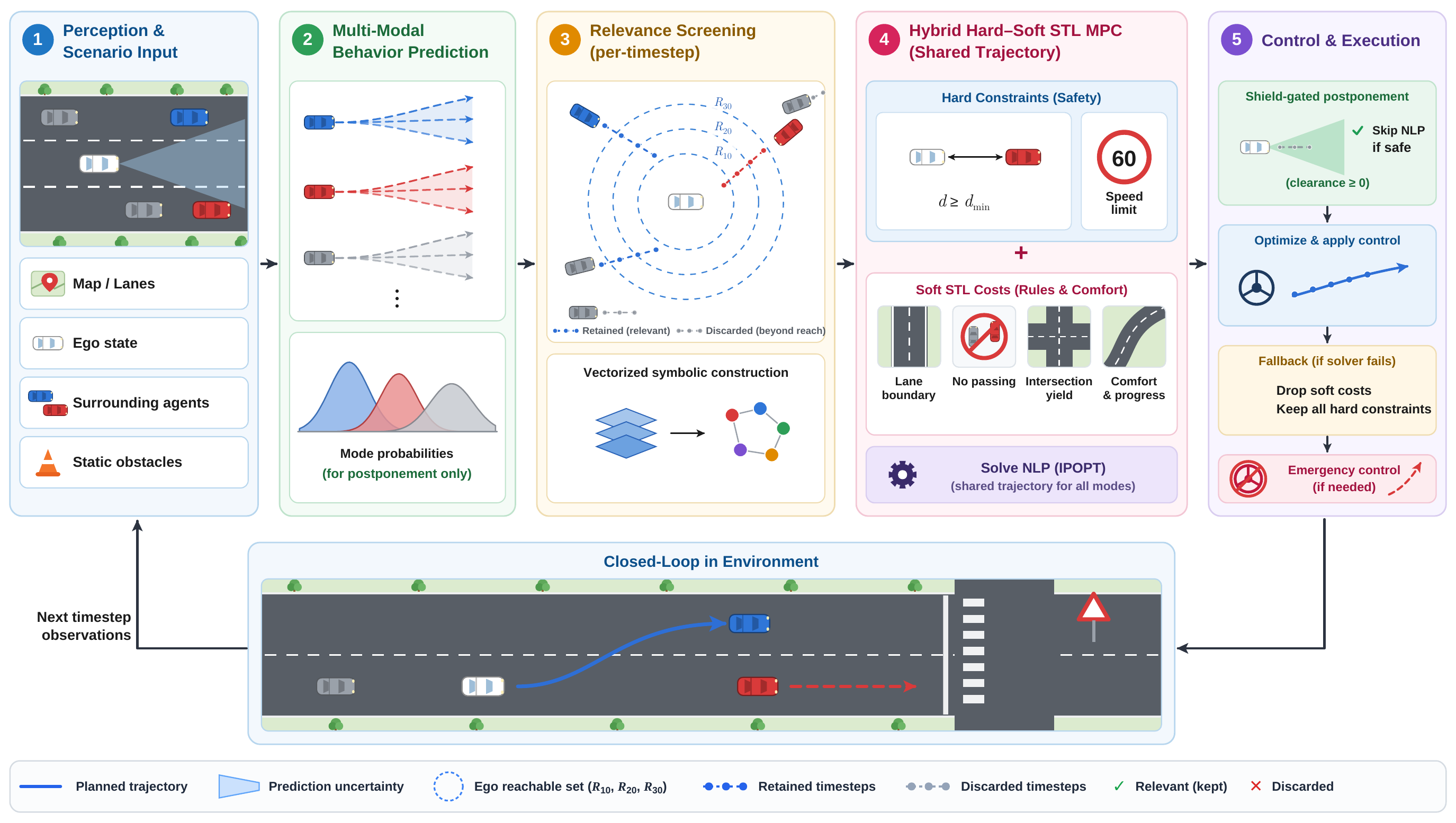}
\caption{Proposed Hybrid STL-MPC pipeline. Multi-modal predictions constrain a
\emph{single} shared ego trajectory; no scenario tree or branch-specific
controls are built, and mode probabilities enter only the postponement trigger.
Minimum distance and speed limit are explicit constraints;
  soft costs shape preferred clearance, speed, and scenario-specific
  intersection-yield and no-passing behavior. Screening is
per (participant, timestep): terms beyond the ego reachable bound $R_t$ are
provably inactive and removed before symbolic construction, leaving the
feasible set and objective unchanged (Proposition~\ref{prop:screening}).
Postponement is evaluated before each solve and bypasses it when a shield
certifies non-negative clearance; solver failure triggers a fallback that drops
soft costs but retains every hard constraint.}
\label{fig:framework}
\vspace{-15pt}
\end{figure*}

Fig.~\ref{fig:framework} summarizes our planning pipeline. At every replanning
step, the planner receives the current ego state, static obstacles, and
multi-modal predictions of surrounding agents. In contrast to Branch MPC, we
optimize a \emph{single shared ego trajectory} against all supplied behavior
hypotheses. The main design principle is to separate safety-critical feasibility
from preference-level STL shaping while keeping the resulting nonlinear program
tractable for receding-horizon execution.

\subsection{Ego dynamics and multi-modal predictions}

The ego state is
$x=[p_x,p_y,\psi,v]^\top\in\mathbb{R}^4$ with control
$u=[a,\delta]^\top\in\mathbb{R}^2$. We use the kinematic bicycle model
\begin{equation}
\dot p_x=v\cos\psi,\quad
\dot p_y=v\sin\psi,\quad
\dot\psi=\frac{v\tan\delta}{L_w},\quad
\dot v=a,
\label{eq:dynamics}
\end{equation}
which is discretized by forward Euler,
$x_{t+1}=f(x_t,u_t)$, with sampling time $\Delta t$.
The decision variables over horizon $H$ are
$X=\{x_0,\ldots,x_H\}$ and
$U=\{u_0,\ldots,u_{H-1}\}$, subject to state, control, and jerk bounds. Because the same ego state and control sequence is optimized against all
retained behavior hypotheses, no branch-specific controls or
non-anticipativity constraints are required.
Our implementation uses $\Delta t=0.1$\,s, horizon $H=30$, and wheelbase $L_w=2.578$\,m.

Let $\mathcal{A}=\{1,\ldots,N\}$ denote the dynamic agents. For agent $i$,
the predictor supplies a finite set of behavior hypotheses
\begin{equation}
\mathcal{H}_i =
\left\{
\tau_i^{(m)}
=
\{x_{i,t}^{(m)}\}_{t=0}^{H}
\right\}_{m\in\mathcal{M}_i}.
\label{eq:modes}
\end{equation}

Here $\mathcal{M}_i$ indexes the hypotheses of agent $i$, $\mathcal{H}=\bigcup_{i\in\mathcal{A}}\mathcal{H}_i$, and $\mathcal{O}$ collects the static obstacles. The trajectories are time-aligned at every replanning step. Predictor
probabilities may additionally be available, but the primary MPC does not
probability-weight the modes: every supplied hypothesis constrains the same ego
trajectory. Probabilities are used only by the decision-postponement mechanism
described in Sec.~\ref{sec:method}. This avoids enumeration of joint mode
combinations and branch-specific ego controls.

\subsection{Hybrid hard--soft STL representation}

We represent traffic requirements through signed margins
$g_r(x_t,\mathcal{H},\mathcal{O})$, where positive values indicate satisfaction.
Rather than assigning every rule the same optimization role, we divide the
active specification into hard safety predicates
$\mathcal{R}_{\mathrm{h}}$ and soft preference predicates
$\mathcal{R}_{\mathrm{s}}$.

For $r\in\mathcal{R}_{\mathrm{h}}$, the corresponding STL specification is
\begin{equation}
\varphi_r^{\mathrm{h}}
=
\mathbf{G}_{[0,H]}
\left(g_r(x_t,\cdot)\geq 0\right),
\label{eq:hard_stl}
\end{equation}
and is imposed directly as an optimization constraint. For a soft predicate
$r\in\mathcal{R}_{\mathrm{s}}$, we use the quadratic deficit
\begin{equation}
\ell_r(t)
=
w_r
\left[-g_r(x_t,\cdot)\right]_+^2,
\qquad
[z]_+ = \max(z,0),
\label{eq:soft_stl}
\end{equation}
with per-rule weight $w_r>0$, which is zero once the preferred margin is
satisfied. For distance-type predicates the support is a bounded neighborhood
of the participant, later enabling exact relevance screening.
 We populate the STL rule library automatically using TR2MTL~\cite{Manas_2024};
  Table~\ref{tab:stl_params} details the hard and soft roles
  of the benchmark rules.

\begin{table}[t]
\centering
\caption{STL terms entering the reported benchmark. ``Hard'' denotes an explicit
NLP constraint; soft terms enter the quadratic objective. Distance thresholds
are adapted to scenario geometry as described in Sec.~\ref{sec:experiments}.}
\label{tab:stl_params}
\scriptsize
\setlength{\tabcolsep}{3pt}
\begin{tabular}{p{1.55cm}p{2.4cm}p{3.25cm}}
\toprule
\textbf{Rule} & \textbf{Hard role} & \textbf{Soft role} \\
\midrule
Distance
& Explicit minimum clearance against all retained hypotheses
& Preferred clearance $d_{\mathrm{safe}}$ \\

Speed
& Explicit upper-speed constraint
& Preferred upper speed threshold \\

Intersection yield
& Covered by generic hard distance safety
& Gated crossing-traffic clearance cost \\

No passing
& --
&  Zone-gated soft cost plus a separate high-weight lateral-deviation penalty \\
\bottomrule
\end{tabular}
\vspace{-15pt}
\end{table}

\paragraph{Distance safety and preferred clearance.}
For a participant hypothesis $(i,m)$ with position $p_{i,t}^{(m)}$ and radius
$r_i$, the hard clearance condition is
\vspace{-5pt}
\begin{equation}
\left\|p_t-p_{i,t}^{(m)}\right\|_2
\geq d_{\min}+r_i+b,
\qquad
\forall i,m,t,
\label{eq:hard_distance}
\end{equation}
where $d_{\min}=1.5$\,m, vehicles use $r_i=2.5$\,m, and $b=0.3$\,m is a
numerical safety buffer. Static
obstacles are treated analogously. Evaluation uses the unbuffered threshold
$d_{\min}$. The corresponding soft term replaces $d_{\min}+b$ by the preferred
clearance $d_{\mathrm{safe}}=6$\,m and applies~\eqref{eq:soft_stl}. Thus the hard layer defines the collision-avoidance envelope, while the soft layer encourages
additional clearance within the feasible region.
\vspace{-2pt}
\paragraph{Speed and interaction preferences.}
The posted speed limit is imposed explicitly as
$v_t\leq v_{\max}^{\mathrm{hard}}=25$\,m/s, with a soft preference at
$v_{\max}^{\mathrm{soft}}=20$\,m/s. At intersections, crossing agents are
identified through heading difference and proximity to the conflict region.
Their yield-specific term is a soft clearance penalty gated by heading and
conflict distance; collision avoidance remains protected by
Eq.~\eqref{eq:hard_distance} for every predicted mode. In no-passing scenarios,
lateral deviation from the initial lane is penalized by a high-weight quadratic
cost activated through a smooth spatial gate over the restricted zone.

\subsection{Shared-trajectory Hybrid STL-MPC}

At planning step $k$, the proposed planner solves one nonlinear program over
the shared ego trajectory:
\begin{align}
\min_{X,U}\quad
& J_{\mathrm{prog}}(X)
+ J_{\mathrm{ctrl}}(U)
+ J_{\mathrm{jerk}}(U) \nonumber\\[-2pt]
&\quad
+ \lambda_{\mathrm{stl}}J_{\mathrm{soft}}(X,\mathcal{H})
+ J_{\mathrm{np}}(X)
\label{eq:opt}\\
\text{s.t.}\quad
& x_0=x_k,\qquad
x_{t+1}=f(x_t,u_t), \nonumber\\
& x_t\in\mathcal{X},\qquad
u_t\in\mathcal{U}, \nonumber\\
& \left\|p_t-p_{i,t}^{(m)}\right\|_2
\geq d_{\min}+r_i+b,
\quad \forall i,m,t, \nonumber\\
& \left\|p_t-p_o\right\|_2
\geq d_{\min}+r_o+b,
\quad \forall o,t, \nonumber\\
& v_t\leq v_{\max}^{\mathrm{hard}},\qquad
|j_t|\leq j_{\max}.
\nonumber
\end{align}
Here \( J_{\mathrm{soft}}
=
\sum_{r\in\mathcal{R}_{\mathrm{s}}}
\sum_{t=0}^{H-1}\ell_r(t) \)
collects the quadratic STL preference terms with scaling $\lambda_{\mathrm{stl}}$. In no-passing scenarios, the restricted-zone rule additionally contributes $J_{\mathrm{np}}$, a high-weight quadratic lateral-deviation penalty gated smoothly over the zone, applied outside that scaling, and retained whenever $J_{\mathrm{soft}}$ is
dropped. The progress objective combines a
weak running position cost and a stronger terminal cost,
\begingroup
\setlength{\abovedisplayskip}{3pt}
\setlength{\abovedisplayshortskip}{3pt}
\begin{equation}
J_{\mathrm{prog}}
=
w_R\sum_{t=0}^{H-1}\|p_t-p_g\|_2^2
+
w_T\|p_H-p_g\|_2^2,
\end{equation}
\endgroup
with $w_R=0.1w_p$ and $w_T=5w_p$. Control effort penalizes acceleration and steering magnitude.

To maintain control continuity, jerk is penalized within the horizon and across
successive MPC calls:
\begin{equation}
j_t=\frac{a_t-a_{t-1}}{\Delta t},
\qquad
J_{\mathrm{jerk}}
=
w_j\sum_t
\left(\frac{j_t}{j_{\max}}\right)^2.
\label{eq:jerk}
\end{equation}
For $t=0$, $a_{-1}$ is the acceleration applied by the previous replanning
step; this inter-replan term is weighted twice as strongly. The same $j_{\max}=6$\,m/s$^3$ bound is additionally imposed as a hard jerk constraint; acceleration and steering carry separate control bounds.

Eq.~\eqref{eq:opt} is therefore \emph{not} a scenario-tree optimization:
all agent modes constrain the same $(X,U)$. The hard-only ablation uses the
identical explicit hard constraint set but removes
$J_{\mathrm{soft}}$, allowing us to isolate the contribution of soft STL
shaping under the same safety envelope.

\section{Methodology: Efficient Hybrid STL-MPC}
\label{sec:method}

This section describes the proposed planner components:
(i) multi-hypothesis planning with a shared ego trajectory,
(ii) exact reachability-based relevance screening,
(iii) vectorized symbolic construction,
(iv) safety-gated decision postponement, and
(v) warm-started optimization with hard-preserving fallback.

\subsection{Shared-trajectory multi-hypothesis planning}
\label{subsec:multi_hyp}

At replanning step $k$, each predicted mode is first aligned to the current
time and truncated or extended to the MPC horizon. Let
$\mathcal{H}_k=\{(i,m)\}$ denote the resulting set of agent--mode hypotheses.
Unlike Branch MPC, we do not construct joint scenario combinations or
branch-specific ego controls. Instead, all hypotheses constrain the same
decision variables $(X,U)$ introduced in Sec.~\ref{sec:problem}.

Consequently, the number of ego decision variables remains $O(H)$, while the
interaction constraints scale with
$O(H\sum_i |\mathcal{M}_i|)$ rather than with the Cartesian product of agent
modes. For the hard distance rule, every retained hypothesis $(i,m)$ imposes
\begin{equation}
    \|p_t-p_{i,t}^{(m)}\|_2
    \geq d_{\min}+r_i+b ,
    \qquad t=0,\ldots,H ,
    \label{eq:multi_hyp_hard}
\end{equation}
where $b$ is the optimization-side safety buffer. Hence a feasible ego
trajectory must remain safe with respect to every supplied mode, while soft
costs select more comfortable behavior inside this common feasible region.

\subsection{Exact relevance screening}
\label{subsec:screening}

Instantiating every participant--timestep STL term creates a large symbolic
graph even when many terms cannot become active. We therefore screen terms
using a conservative upper bound on the ego's reachable path length.

Let $R_t$ denote an upper bound on the distance that the ego can travel from
its current position by prediction step $t$. Under the discrete vehicle model,
we compute
\begingroup
\setlength{\abovedisplayskip}{3pt}
\setlength{\abovedisplayshortskip}{3pt}
\begin{align}
    R_0 &= 0, \\
    R_{t+1} &= R_t +
    \Delta t \,
    \min\!\left(v_{\max},
    v_k+t\Delta t\,a_{\max}\right).
    \label{eq:reachable_bound}
\end{align}
\endgroup
Here $a_{\max}=8$\,m/s$^2$ and $v_{\max}=v_{\max}^{\mathrm{hard}}$. Because the state bounds enforce $v_t\geq0$, this bound is independent of steering and upper-bounds displacement for any admissible control sequence.

Let
$d_{i,t}^{(m)}=\|p_{i,t}^{(m)}-p_k\|_2$
be the distance from the current ego position to a predicted participant.
For a distance term with activation threshold $\rho$, the term is retained only
if
\begin{equation}
    d_{i,t}^{(m)}
    \leq R_t+\rho+\epsilon_{\mathrm{scr}},
    \label{eq:screening}
\end{equation}
where $\epsilon_{\mathrm{scr}}=0.5$\,m is an additional conservative screening
margin. For the hard distance constraint,
$\rho=d_{\min}+r_i+b$; for the soft distance cost,
$\rho=d_{\mathrm{safe}}+r_i$.

\begin{proposition}[Exact screening]
\label{prop:screening}
Let $R_t$ satisfy Eq.~\eqref{eq:reachable_bound}, let
$d_{i,t}^{(m)}=\|p_{i,t}^{(m)}-p_k\|_2$, and suppose
$d_{i,t}^{(m)} > R_t+\rho+\epsilon_{\mathrm{scr}}$. Then for every admissible
control sequence: (i) with $\rho=d_{\min}+r_i+b$ the corresponding hard
constraint~\eqref{eq:hard_distance} is satisfied; and (ii) with
$\rho=d_{\mathrm{safe}}+r_i$ the compact-support penalty~\eqref{eq:soft_stl} is
identically zero. Removing the term therefore leaves both the feasible set and
the objective unchanged.
\end{proposition}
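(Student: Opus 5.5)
The argument has two steps: a displacement bound for every admissible trajectory, then the reverse triangle inequality. Fix an admissible control sequence $U$ and let $X$ be the trajectory it generates from $x_0=x_k$, so $p_0=p_k$.

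\textbf{Step 1: displacement bound.} I would show by induction on $t$ that
\[
\|p_t-p_k\|_2\le R_t .
\]
Under the forward-Euler discretization of~\eqref{eq:dynamics}, the position update is $p_{t+1}-p_t=\Delta t\,v_t(\cos\psi_t,\sin\psi_t)^\top$. Hence $\|p_{t+1}-p_t\|_2=\Delta t\,|v_t|=\Delta t\,v_t$, using the state bound $v_t\ge 0$; this is the only place steering could have entered, and it drops out. Next, $v_t\le v_{\max}$ by the hard speed constraint. From $v_{s+1}=v_s+\Delta t\,a_s$ with $a_s\le a_{\max}$ we get $v_t\le v_k+t\Delta t\,a_{\max}$. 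So $v_t\le\min(v_{\max},v_k+t\Delta t\,a_{\max})$. The triangle inequality then gives $\|p_{t+1}-p_k\|_2\le R_t+\Delta t\,v_t\le R_{t+1}$, which is exactly~\eqref{eq:reachable_bound}. The base case $R_0=0$ holds because $p_0=p_k$.

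\textbf{Step 2: reverse triangle inequality.} Under the hypothesis $d_{i,t}^{(m)}>R_t+\rho+\epsilon_{\mathrm{scr}}$,
\[
\|p_t-p_{i,t}^{(m)}\|_2\ \ge\ d_{i,t}^{(m)}-\|p_t-p_k\|_2\ >\ \rho+\epsilon_{\mathrm{scr}}\ \ge\ \rho .
\]
For (i), take $\rho=d_{\min}+r_i+b$; then~\eqref{eq:hard_distance} holds at $(i,m,t)$ strictly. For (ii), take $\rho=d_{\mathrm{safe}}+r_i$. The soft distance margin is $g_r=\|p_t-p_{i,t}^{(m)}\|_2-(d_{\mathrm{safe}}+r_i)$, which is positive, so $[-g_r]_+=0$ and $\ell_r(t)=0$ by~\eqref{eq:soft_stl}.

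\textbf{Conclusion.} These facts hold on the whole admissible set, which is defined by dynamics, state/control bounds and the speed constraint, all of which are retained. Therefore:
\begin{itemize}
\item deleting a hard constraint that is implied by the retained constraints does not enlarge the feasible set;
\item deleting a summand that vanishes on that set leaves the objective unchanged there.
\end{itemize}
This is the final claim.

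\textbf{Main obstacle.} The delicate step is Step 1, in two respects.

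First, the induction must not be circular. The speed bound $v_t\le v_{\max}$ is itself a retained hard constraint, so feasibility with respect to the other constraints already guarantees it. If "admissible" is meant without the speed constraint, the $\min$ must still be justified by the state box $\mathcal{X}$, which I would assume contains $v\le v_{\max}$.

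Second, the indexing in~\eqref{eq:reachable_bound} uses $v_k+t\Delta t\,a_{\max}$ as the bound on $v_t$, which matches $v_0=v_k$. I would verify this alignment explicitly, noting that an off-by-one would only make the bound looser, not invalid, provided $a_{\max}\ge0$.
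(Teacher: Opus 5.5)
Your proposal is correct and follows the same route as the paper's proof. First bound the Euclidean displacement by the accumulated reachable path length, $\|p_t-p_k\|_2\le R_t$, independently of steering; then apply the reverse triangle inequality to get $\|p_t-p_{i,t}^{(m)}\|_2>\rho$, which gives (i) directly and $[-g_r]_+^2=0$ for (ii). Your explicit induction on the forward-Euler update (using $v_t\ge0$, $v_t\le v_{\max}$, and $v_t\le v_k+t\Delta t\,a_{\max}$) spells out the step the paper's sketch states in one line, and restricting the objective-equality claim to the admissible set is the precise reading of the paper's conclusion.
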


\begin{proof}[Proof sketch]
By construction $R_t$ accumulates $\Delta t\,\min(v_{\max},\cdot)$ and hence
upper-bounds the ego path length, which in turn upper-bounds Euclidean
displacement, $\|p_t-p_k\|_2\leq R_t$, independently of steering. The triangle
inequality then gives
\[
  \|p_t-p_{i,t}^{(m)}\|_2
  \geq d_{i,t}^{(m)}-R_t
  > \rho+\epsilon_{\mathrm{scr}}
  > \rho.
  \]

For (i) this is the hard inequality. For (ii) it gives $g_r\geq0$, and
$[-g_r]_+^2=0$ whenever $g_r\geq0$. The same argument is applicable to static
obstacles. \end{proof}

The same principle applies to the crossing-traffic clearance terms used for
intersection yielding. Screening is exact for the hinge and quadratic penalties
considered here; the reported experiments use quadratic penalties.

\subsection{Vectorized symbolic construction}
\label{subsec:vectorization}

After screening, the remaining constraints and soft costs are constructed
vector-wise rather than through scalar Python loops over time. In particular,
the $H$ kinematic bicycle transitions are expressed as four vector equality
constraints; acceleration and steering bounds as two vector constraints;
the velocity bound as one vector constraint; jerk bounds as two; and hard distance constraints as one vector inequality per participant hypothesis over its active timesteps. Soft distance and intersection-yield costs are similarly evaluated over vectors of active indices.

This transformation preserves the same dynamics, constraints, and objective
values while substantially reducing the number of CasADi graph-construction
operations. The scalar and vectorized formulations were numerically verified
to agree to $10^{-9}$ on randomized inputs. Besides reducing graph-construction
cost, the smaller symbolic graph also reduces derivative-evaluation overhead
inside IPOPT.

\subsection{Safety-gated decision postponement}
\label{sec:postponing}

Many times prediction ambiguity makes an immediate optimization unnecessary
or undesirable. Decision postponement skips the current NLP solve and applies
  a shield-checked control for one sampling interval
  ($\Delta t=0.1$\,s). The planner reassesses the trigger and
  clearance condition at the next step. Postponement is requested
  when any of the following conditions holds:

\textbf{C1 (Prediction ambiguity):}
a multi-modal agent lies within $3d_p$ of the ego, with $d_p=3$\,m, and the
probability gap between its two most likely modes is below
$\Delta\pi_{\mathrm{thresh}}=0.25$.

\textbf{C2 (Imminent interaction):}
the closest agent is within $d_p$, lies within $30^\circ$ of the ego's forward
direction, has a relative heading between $60^\circ$ and $120^\circ$, and is
closing at more than $2$\,m/s.

\textbf{C3 (Dense interaction):}
at least three agents simultaneously lie within $3d_p$.

A trigger does not by itself authorize postponement. Instead, a lightweight
safety shield evaluates a deterministic library of bounded acceleration and
steering commands. Candidate accelerations respect the inter-step jerk bound, while steering candidates span straight, half-limit, and full-limit steering in both directions. Candidates violating the acceleration, steering, or jerk bounds used by the MPC are discarded. Each remaining candidate is rolled out for up to 15 steps against all retained agent hypotheses and static obstacles and ranked primarily by its worst predicted clearance.

Postponement is accepted only if the best candidate maintains non-negative hard
distance margin over this rollout; otherwise, the full MPC is solved. Hence
prediction ambiguity can avoid an unnecessary NLP solve without allowing the
postponement mechanism itself to relax the minimum-clearance requirement.

\subsection{Warm-starting and hard-preserving fallback}
\label{subsec:fallback}

We warmstart IPOPT from the previous MPC solution whenever the observed
state remains close to the previously predicted next state. Specifically, the
cached solution is accepted when the position discrepancy is below $2$\,m and
the heading discrepancy is below $0.5$\,rad. The state and control sequences are
shifted by one step, the terminal state is extrapolated via its final velocity
and heading, and the measured state is imposed at $x_0$. If the cache is
invalid, a zero-control constant-velocity rollout provides the initial guess.

If the primary hybrid NLP fails to solve or fails the numerical hard-constraint
validation, the planner performs one \emph{hard-preserving fallback}: the soft
STL costs are removed, and the problem is re-solved with the same dynamics,
control and jerk limits, hard distance and speed constraints, safety buffer,
and relevance screening. Thus, fallback increases numerical flexibility without weakening the explicit safety envelope. If both optimization attempts fail, a bounded emergency control from the same conservative control library is used as a final execution recovery and recorded for evaluation (see Fig.~\ref{fig:framework}).

\begin{algorithm}[t]
  \caption{Hybrid STL-MPC at planning step $k$}
  \label{alg:planner}
  \begin{algorithmic}[1]
  \REQUIRE State $x_k$, goal $x_g$, obstacles $\mathcal{O}$,
  predictions $\mathcal{H}$, rules $\mathcal{R}$,
  cache $\hat S_{k-1}$, previous acceleration $a_{k-1}$
  \ENSURE Applied control $u_k$

  \STATE Align predictions to step $k$
  \IF{C1, C2, or C3 holds}
      \STATE $(u^{\mathrm{sh}},\rho^{\mathrm{sh}})
      \gets$ safety shield
      \IF{$\rho^{\mathrm{sh}}\geq0$}
          \STATE Update $a_k\gets u^{\mathrm{sh}}_a$;
          \RETURN $u^{\mathrm{sh}}$
      \ENDIF
  \ENDIF

  \STATE Compute $R_{0:H}$ and screen terms using
  \eqref{eq:screening}
  \STATE Initialize from a valid shifted cache or rollout
  \STATE Build vectorized NLP~\eqref{eq:opt}; solve with IPOPT
  \IF{solve or hard-constraint validation fails}
      \STATE Re-solve without $J_{\mathrm{soft}}$,
      retaining all hard constraints
  \ENDIF
  \IF{a hard-valid solution is available}
      \STATE Cache $(X^*,U^*)$; set $u_k\gets U^*_0$
  \ELSE
      \STATE $u_k\gets$ bounded emergency control
  \ENDIF
  \STATE Update $a_k\gets u_{k,a}$; \RETURN $u_k$
  \end{algorithmic}
  \end{algorithm}

Algorithm~\ref{alg:planner} summarizes the resulting receding-horizon loop.

\begin{remark}[Constraint satisfaction]
\label{rem:constraint-satisfaction}
For a planning step at which an NLP solution is accepted, the explicit distance and speed inequalities are satisfied over the current prediction horizon up to the solver and numerical validation tolerances. Exact relevance screening does not weaken these constraints: hard predicates are removed only at time indices unreachable within the horizon, and soft terms only where the penalty is identically zero, so neither the feasible set nor the objective is altered. These statements remain conditional on the vehicle model and the supplied agent predictions, apply only to steps executed from an accepted solution, and do not constitute a probabilistic or recursive-feasibility guarantee for future closed-loop behavior. Steps executed under decision postponement or the final emergency-control path bypass the optimization; both are runtime mechanisms evaluated empirically rather than assigned a formal guarantee.
\end{remark}
\section{Experimental Evaluation}
\label{sec:experiments}

We evaluate the proposed Hybrid STL-MPC in closed-loop Monte Carlo simulation
against uncertainty-aware planning baselines and component ablations. The
evaluation answers three questions: (i) whether explicit hard constraints and
soft STL shaping complement each other, (ii) whether the proposed planner
remains effective under multi-modal interactions, and (iii) whether relevance
screening and vectorized construction reduce computational cost without
changing the represented optimization problem.

\subsection{Experimental Setup}

\paragraph{Scenarios.}
We evaluate 6 interactive driving scenarios:
\emph{highway merge}, \emph{stress-test highway},
\emph{challenging intersection}, \emph{challenging left-turn intersection},
\emph{multi-modal intersection}, and \emph{no-passing}.
Each scenario includes 1–12 surrounding agents and varies initial states, IDM parameters, traffic density, and interaction geometry.
The multi-modal setting additionally provides alternative predicted behavior
trajectories while simulator ground truth is kept separate from the planner's
prediction input. The no-passing scenario activates the spatially gated
restricted-zone rule.

\paragraph{Compared methods.}
We compare seven methods:
\emph{Single-mode goal-tracking MPC}, which plans against only the most likely
prediction mode and includes neither hard STL constraints, soft STL shaping,
nor the postponement shield;
\emph{Risk-Aware BMPC}~\cite{zhang2024riskawarebmpc};
\emph{Game-Theoretic BMPC}~\cite{zhang2023bmpcmerge};
\emph{Soft-only}, which removes explicit NLP distance constraints while retaining shared state/control bounds;
\emph{Hard-only}, which retains the same hard constraints as Hybrid but removes
$J_{\mathrm{soft}}$;
\emph{Hybrid (Ours)}; and
\emph{Hybrid (no screening)}, which is mathematically identical to Hybrid but
disables the exact relevance-screening mechanism. Soft-only and Hard-only represent the two prevailing STL-MPC treatments:
relaxable robustness costs and strict constraint encodings, respectively.
Both are applied under an otherwise identical formulation, so the comparison
isolates the treatment rather than the implementation.

All methods share the same vehicle model and state/control bounds, including the 25\,m/s velocity state bound, with $H=30$, $\Delta t=0.1$\,s, and a 600\,ms IPOPT time limit. The proposed method, its ablations, and the single-mode baseline impose $j_{\max}=6$\,m/s$^3$ both as a hard jerk bound and as the normalizer of the jerk penalty; the risk-aware~\cite{zhang2024riskawarebmpc} and game-theoretic~\cite{zhang2023bmpcmerge} baselines retain their published formulations, in which jerk enters only as a soft cost. The proposed method uses quadratic soft STL penalties and a  0.3\,m optimization-side hard-distance buffer above the evaluated threshold; the buffer applies to the variants that enforce distance as an explicit constraint (Hybrid, Hard-only, and the screening ablation).

\paragraph{Protocol and metrics.}
Each method is evaluated on 50 randomized trials for each of the six scenarios
using the same random seed, giving 300 episodes per method and 2100 episodes in
total. Each episode executes at most 80 closed-loop steps. We define
\emph{strict success} as reaching within 5\,m of the goal, reaching at least
50\% of the target velocity, having no collision or recorded rule violation,
and not aborting due to planning failure. For Hybrid, decision postponement is exercised on 10.5\% of executed steps
overall and on 16.5--22.0\% of steps in the three intersection scenarios; these steps bypass the optimization and are monitored separately. Crossing-traffic yielding is verified against simulator ground truth on every executed step, including those under postponement; no violation was recorded across the 1031 postponed steps in the three intersection scenarios.

Outcome rates (success, violation, collision, and abort) are computed over all
episodes. Continuous trajectory-quality metrics are reported only over
\emph{completed} episodes, because averaging minimum distance, jerk, or final
goal distance over prematurely aborted trajectories can reward a planner for
terminating before entering the interaction. Planning latency is measured
end-to-end per MPC call, including CasADi graph construction, failed primary
attempts, and fallback attempts. We report mean, standard deviation, and
95th-percentile latency.

\subsection{Overall Results}

\begin{table*}[t]
\centering
\caption{Closed-loop results pooled over six scenarios.
Outcome rates are computed over all 300 episodes per method.
Jerk, final goal distance, and minimum distance are averaged over completed
episodes (non-aborted) only. Planning time reports mean/std./95th percentile over all MPC
calls. Lower is better for jerk, goal distance, and time; higher minimum
distance indicates larger executed clearance. Best result is in \textbf{bold}, followed by \underline{underlined} as second best.}
\label{tab:planner_comparison}

\begin{tabular}{lcccccccc}
\toprule
\textbf{Method}
& \textbf{Succ.} $\uparrow$
& \textbf{Viol.} $\downarrow$
& \textbf{Coll.} $\downarrow$
& \textbf{Abort} $\downarrow$
& \textbf{Jerk} $\downarrow$
& \textbf{GoalDist} $\downarrow$
& \textbf{MinDist} $\uparrow$
& \textbf{Plan ms} $\downarrow$ \\
& (\%) & (\%) & (\%) & (\%)
& (m/s$^3$) & (m) & (m) & $\mu/\sigma/p95$ \\
\midrule

Single-mode MPC
& 39.3
& 18.7
& 15.3
& \underline{34.3}
& 4.162
& \textbf{1.270}
& 4.266
& 122.9/156.1/491.6 \\

Risk-Aware BMPC~\cite{zhang2024riskawarebmpc}
& 45.0
& 9.3
& \underline{2.7}
& 44.3
& \textbf{3.763}
& 1.402
& 4.125
& 409.7/487.4/648.0 \\

Game-Theoretic BMPC~\cite{zhang2023bmpcmerge}
& 49.0
& \underline{1.3}
& \textbf{0.0}
& 49.7
& 5.541
& \underline{1.392}
& 4.073
& 318.6/\underline{115.7}/538.0 \\

\midrule

Soft-only
& 93.3
& 6.7
& 3.0
& \textbf{0.0}
& 4.090
& 1.400
& 5.603
& 122.7/237.4/464.4 \\

Hard-only
& \underline{98.7}
& \textbf{0.0}
& \textbf{0.0}
& \textbf{0.0}
& 4.366
& 2.048
& 4.393
& \textbf{99.4}/\textbf{114.0}/\textbf{347.4} \\

Hybrid (no screening)
& 97.0
& \textbf{0.0}
& \textbf{0.0}
& \textbf{0.0}
& \underline{4.025}
& 2.187
& \textbf{6.249}
& 153.0/248.4/708.8 \\

\textbf{Hybrid (Ours)}
& \textbf{99.3}
& \textbf{0.0}
& \textbf{0.0}
& \textbf{0.0}
& 4.035
& 1.491
& \underline{5.871}
& \underline{110.8}/154.1/\underline{360.5} \\

\bottomrule
\end{tabular}
\end{table*}

Table~\ref{tab:planner_comparison} shows that Hybrid attains the highest strict
success rate, \textbf{99.3\%} (298/300 episodes), with no observed collisions
and no recorded rule violations. The corresponding 95\% Wilson interval for
success is $[97.6,99.8]\%$; for 0/300 observed collisions or violations, the
95\% Wilson interval is $[0,1.26]\%$. We therefore report these quantities as
\emph{observed} rates rather than empirical guarantees.

Hard-only and Hybrid share the same explicit hard constraints
  and achieve comparable observed success (98.7\% and 99.3\%),
  with no recorded collisions or violations. Adding soft STL
  shaping increases mean minimum clearance from 4.393 to
  5.871\,m and reduces reported longitudinal jerk from 4.366
  to 4.035\,m/s$^3$. These results support complementary roles
  for hard constraints and soft preferences in the evaluated
  scenarios. Soft-only achieves similar reported jerk
  (4.090\,m/s$^3$), but its strict success falls to 93.3\%,
  with 6.7\% recorded violations and 3.0\% collisions.
  Fig.~\ref{fig:ablation_screening}(a) illustrates its clearance
  violation in one left-turn episode.

The uncertainty-aware baselines exhibit substantially lower strict success.
The single-mode MPC baseline succeeds in 39.3\% of episodes and records both
collisions (15.3\%) and rule violations (18.7\%). Risk-Aware and
Game-Theoretic BMPC reach 45.0\% and 49.0\% success, with abort rates of
44.3\% and 49.7\%. Baseline aborts arise predominantly from a single \textit{Maximum Time Exceeded} event, which terminates the episode; that
the proposed formulation and its hard-only ablation record no aborts---the
latter without any fallback path---indicates that the difference reflects the
numerical conditioning of the constructed program rather than the recovery
mechanism alone. These aborts are important when interpreting trajectory quality,
since pooled averages reward planners that terminate before the interaction;
conditioning on completed episodes removes this bias.
\begin{figure*}[t]
\centering
\includegraphics[width=0.85\textwidth]{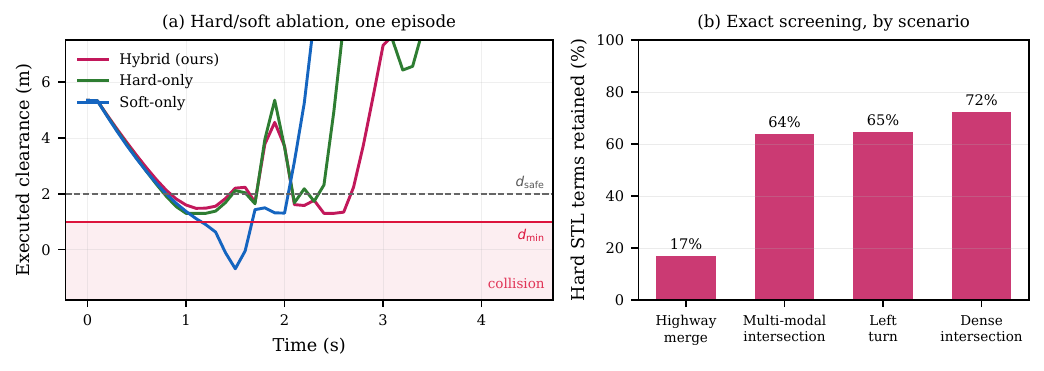}
\caption{Measured effects of both contributions.
(a)~Minimum executed clearance in a challenging left-turn episode. Soft-only
uses the same soft STL costs as Hybrid but no explicit NLP constraints; it crosses
$d_{\min}$ at $t\approx1.15$\,s and collides, whereas Hybrid and Hard-only
remain safe. Across 300 episodes, Soft-only has 6.7\% violations and 3.0\%
collisions, versus 0.0\% for Hybrid (Table~\ref{tab:planner_comparison}).
Hybrid and Hard-only both maintain the hard clearance; aggregate clearance differences appear in Table~\ref{tab:planner_comparison}.
(b)~Fraction of hard-distance terms retained by exact relevance screening,
ranging from 17\% on the highway merge to 72\% at the dense intersection.
By Proposition~\ref{prop:screening}, removed terms are inactive, preserving
the feasible set and objective.}
\label{fig:ablation_screening}
\end{figure*}

\subsection{Per-Scenario Analysis}

\begin{table}[t]
\centering
\caption{Per-scenario strict success / recorded violation rate (\%).
Each entry is ``success / violation'' over 50 trials.}
\label{tab:per_scenario}
\scriptsize
\setlength{\tabcolsep}{2.5pt}
\renewcommand{\arraystretch}{1.08}
\begin{tabular}{l@{\hspace{2pt}}cccccc}
\toprule
\textbf{Scenario}
& \textbf{Hybrid}
& \textbf{Soft}
& \textbf{Hard}
& \textbf{Single-mode}
& \textbf{Risk-BMPC}
& \textbf{Game-BMPC} \\
\midrule
Intersection     & 100/0 & 98/2   & 100/0 & 0/0    & 0/0   & 6/0   \\
Left turn        & 100/0 & 62/38  & 100/0 & 96/4   & 80/2  & 90/8  \\
Highway merge    & 100/0 & 100/0  & 100/0 & 40/60  & 30/54 & 98/0  \\
Multi-modal int. & 100/0 & 100/0  & 100/0 & 0/48   & 94/0  & 0/0   \\
No passing       & 96/0 & 100/0  & 92/0  & 0/0    & 0/0   & 0/0   \\
Stress highway   & 100/0 & 100/0  & 100/0 & 100/0  & 66/0  & 100/0 \\
\bottomrule
\end{tabular}
\vspace{-10pt}
\end{table}

Table~\ref{tab:per_scenario} shows that Hybrid achieves 100\% strict success in
five of the six scenarios and 96\% in the no-passing scenario. The most
diagnostic ablation is the challenging left turn: Soft-only reaches the goal in
all trials, but only 62\% satisfy the strict
definition: 38\% record a rule violation and 18\% collide. Adding the hard safety layer
raises strict success to 100\%. In contrast, removing the soft layer primarily
affects trajectory quality and the no-passing task, where Hard-only reaches
92\% success. The results therefore show different failure modes for the two
components rather than a simple increase in penalty strength.

The multi-modal intersection further highlights the benefit of planning against
all retained behavior hypotheses. Hybrid, Hard-only, and Soft-only all complete
the task in every trial, whereas the single-mode baseline records 48\%
violations and zero strict successes. Game-Theoretic BMPC aborts in every trial of this scenario, whereas Risk-Aware BMPC, which branches explicitly on the supplied modes, completes 94\% at 409.7\,ms mean planning time against 110.8\,ms for Hybrid.

\subsection{Computational Ablation}
\begin{table}[t]
\centering
\caption{Staged latency ablation on the challenging left-turn scenario, paired
over identical seeded episodes. Each row adds one component; the represented
feasible set and objective are unchanged throughout.}
\label{tab:latency_ablation}
\footnotesize
\setlength{\tabcolsep}{6pt}
\begin{tabular}{lrrr}
\toprule
\textbf{Configuration} & \textbf{Build} & \textbf{Solve} & \textbf{Total} \\
                       & (ms)           & (ms)           & (ms)           \\
\midrule
Baseline                    & 234.9 & 484.6 & 733.7 \\
\;+ relevance screening     & 181.3 & 339.0 & 526.3 \\
\;+ vectorized construction & \textbf{22.5} & \textbf{93.0} & \textbf{118.7} \\
\bottomrule
\end{tabular}
\vspace{-15pt}
\end{table}
Screening retains 17--72\% of hard-distance terms in the
  scenarios shown in Fig.~\ref{fig:ablation_screening}(b).
  In Table~\ref{tab:planner_comparison}, it reduces mean planning latency from 153.0 to 110.8\,ms (27.6\%) and
  the pooled 95th percentile from 708.8 to 360.5\,ms. Importantly, screening removes only terms proven
inactive by Eq.~\eqref{eq:screening}; therefore Hybrid and
Hybrid~(no screening) represent the same objective and feasible set. Their
small difference in task success should not be interpreted as a behavioral
effect of screening: reducing graph size changes numerical convergence and timeout incidence under the configured solver time limit.

Table~\ref{tab:latency_ablation} decomposes the two contributions on the
challenging left turn. Of the 615\,ms total reduction, screening accounts for
207\,ms and vectorized construction for 408\,ms. Both reduce solve time as well
as construction time ($484.6\to339.0\to93.0$\,ms), indicating that
expression-graph complexity affects automatic differentiation inside IPOPT and
not only formulation latency. Relative to the unoptimized pre-vectorization implementation, mean planning
time over the full benchmark decreases from 429.4\,ms to 110.8\,ms
(\textbf{74.2\%} reduction).

The resulting planner is therefore substantially faster, but the
110.8\,ms mean and 360.5\,ms 95th-percentile latency do not always constitute a
hard 10\,Hz real-time guarantee. Rather, the results demonstrate that exact
screening and symbolic vectorization make multi-hypothesis STL-MPC
considerably more practical without relaxing the underlying hard constraints.

\textbf{Limitations.}
The vehicle model imposes no explicit lateral-acceleration or curvature
constraint, so we do not claim lateral-comfort guarantees; adding such bounds
trades task success for comfort in the benchmark's tight turns and is left to
future work. Constraint
satisfaction is further conditional on the supplied predictions and holds only
at steps executed from an accepted NLP solution
(Remark~\ref{rem:constraint-satisfaction}). Collision checks model participants as circles (discs) and the ego as a point, so counts are lower bounds.
\section{Conclusion}
\label{sec:conclusion}

We presented a hybrid hard--soft STL-MPC framework that optimizes a single
shared ego trajectory against all retained behavior hypotheses, avoiding
branch-specific ego policies. Minimum-distance and speed requirements enter as
explicit hard constraints, while soft STL robustness terms shape clearance
and comfort within the resulting feasible region. Exact reachability-based relevance screening and vectorized
  symbolic construction reduce computational overhead while
  preserving the represented feasible set and objective. Across six interactive driving scenarios, the
framework attains the highest strict success rate among the compared planners,
with no observed collisions or recorded rule violations under the implemented
execution monitor, and the ablations separate the safety envelope from
behavioral shaping. Future work will address anticipatory speed planning under
an explicit lateral bound, integrate learned interactive predictors with
calibrated uncertainty, and extend finite-horizon constraint satisfaction
toward recursive closed-loop guarantees.

\bibliographystyle{IEEEtran}
\bibliography{references}

\end{document}